\documentclass[a4paper,11pt,oneside]{article}

\usepackage[english]{babel}
\usepackage[T1]{fontenc}
\usepackage[utf8]{inputenc}

\usepackage[pdftex,unicode]{hyperref}
\hypersetup{pdftitle=Score-Driven Rating System for Sports}
\hypersetup{pdfauthor=Vladimír Holý and Michal Černý}

\usepackage{xcolor}
\definecolor{mycol}{rgb}{0.18,0.33,0.69}
\hypersetup{colorlinks=true, linkcolor=mycol, anchorcolor=mycol, citecolor=mycol, filecolor=mycol, urlcolor=mycol}

\usepackage[margin=60pt]{geometry}
\usepackage{amsmath}
\usepackage{amssymb}
\usepackage{bm}

\usepackage{amsthm}
\theoremstyle{definition}
 
\newtheorem{proposition}{Proposition}

\newtheorem{example}{Example}
\newtheorem*{remark}{Remark}
\newtheorem*{mainass}{Key Assumptions}

\usepackage[group-minimum-digits=3]{siunitx}

\usepackage{enumitem}

\usepackage{graphicx}
\usepackage{float}
\usepackage{hhline}
\usepackage{multirow}
\usepackage{rotating}
\usepackage{booktabs}

\usepackage[authoryear]{natbib}

\begin{document}

\begin{center}
{\Large \bfseries Score-Driven Rating System for Sports}
\end{center}

\begin{center}
{\bfseries Vladimír Holý} \\
Prague University of Economics and Business \\
Winston Churchill Square 1938/4, 130 67 Prague 3, Czechia \\
\href{mailto:vladimir.holy@vse.cz}{vladimir.holy@vse.cz}
\end{center}

\begin{center}
{\bfseries Michal Černý} \\
Prague University of Economics and Business \\
Winston Churchill Square 1938/4, 130 67 Prague 3, Czechia \\
\href{mailto:cernym@vse.cz}{cernym@vse.cz}
\end{center}

\noindent
\textbf{Abstract:}
This paper introduces a score-driven rating system, a generalization of the classical Elo rating system that employs the score, i.e.\ the gradient of the log-likelihood, as the updating mechanism for player and team ratings. The proposed framework extends beyond simple win/loss game outcomes and accommodates a wide range of game results, such as point differences, win/draw/loss outcomes, or complete rankings. Theoretical properties of the score are derived, showing that it has zero expected value, sums to zero across all players, and decreases with increasing value of a player's rating, thereby ensuring internal consistency and fairness. Furthermore, the score-driven rating system exhibits a reversion property, meaning that ratings tend to follow the underlying unobserved true skills over time. The proposed framework provides a theoretical rationale for existing dynamic models of sports performance and offers a systematic approach for constructing new ones.
\\

\noindent
\textbf{Keywords:} Elo Rating System, Score, Gradient of Log-Likelihood, Sports Statistics.
\\

\noindent
\textbf{JEL Codes:} C22, Z20.
\\

\section{Introduction}
\label{sec:intro}

The gradient of the log-likelihood, commonly referred to as the score, has a variety of uses in statistics, econometrics, and machine learning. The score plays a pivotal role in several algorithms for maximum likelihood estimation, such as the gradient ascent method and the Newton--Raphson method. It is also central to the score test, also known as the Lagrange multiplier (LM) test, which assesses constraints on parameters of models estimated by the maximum likelihood method---a concept dating back to \cite{Rao1948}. More recently, \cite{Hyvarinen2005} used score matching for the estimation of unnormalized models. Furthermore, \cite{Creal2013} and \cite{Harvey2013} utilized the score in time series models as an update term for time-varying parameters, forming the class of generalized autoregressive score (GAS) models, also known as dynamic conditional score (DCS) models, or simply score-driven models.

In this paper, we explore the use of the score in sports statistics. The term \emph{score} itself suggests a connection to sports. However, this is not the case, at least not intentionally. According to \cite{David1995}, the term \emph{normal score} was first used for the gradient of the log-likelihood by \cite{Fisher1935}, but only in the context of analyzing the genetic traits of families in which one parent carries a genetic abnormality. Later, \cite{Rao1948} used the term \emph{efficient score} generally, without a specific application in mind. Nevertheless, we demonstrate in this paper that the (statistical) score is a perfect tool for assessing outcomes of sports games, often in the form of (sports) score, thus aligning the statistical and sports terminology.

There is a recent strand of literature applying score-driven models of \cite{Creal2013} and \cite{Harvey2013} in sports statistics. \cite{Gorgi2019} used a score-driven Bradley--Terry model for tennis players, incorporating court surface effects, home advantage, and age of players. \cite{Koopman2019} presented an application to national football leagues and utilized three score-driven models, each based on a different distribution tailored to a specific variable type: the bivariate Poisson distribution for the number of goals scored and conceded, the Skellam distribution for the goal difference, and the ordered probit model for the win/draw/loss outcome. \cite{Lasek2021} used models similar to those of \cite{Koopman2019} for rating of football teams. \cite{Holy2022f} proposed a score-driven ranking model based on the Plackett--Luce distribution and illustrated its use in modeling the results of the Ice Hockey World Championships. \cite{Holy2025a} then extended the application of \cite{Holy2022f} in several ways, among others by including various predictor variables, such as physical attributes of players, their experience, and past results from related tournaments. All these papers focus on particular models and specific applications and do not study the properties of the score in the general sports context. \emph{Our first goal is to lay down the mathematical foundations for the use of the score in the statistical modeling of game results.}

Unlike most of the above-mentioned studies using score-driven models in sports, we do not focus on assessing the influence of related variables or forecasting game outcomes, but rather on fairly rating players and teams. Such ratings can be used for matchmaking---that is, pairing of players or teams and organization of tournaments. The objective of rating is conceptually very different. We do not aim to find the model with the best predictive performance, whether in-sample or out-of-sample, but rather to build a model that is fair above all. Looking back at the related studies, \cite{Koopman2019} and \cite{Holy2025a} employed mean-reverting dynamics, \cite{Gorgi2019} and \cite{Lasek2021} utilized random walk dynamics, while \cite{Holy2022f} adopted both approaches. Mean-reverting models are by design unfair for ratings---even unsportsmanlike---as they assume that future performance is determined (by the constant term) and only short-term deviations from the long-term level are possible (due to the autoregressive term). Random walk models are more suitable for ratings; however, including explanatory variables---such as an indicator for the home advantage, as done by \cite{Gorgi2019} and \cite{Holy2022f}---undermines the fairness of the rating. Among these studies, only \cite{Lasek2021} specifically dealt with the task of rating.

For games in which the results take the simple form of either a win or a loss, the Elo rating system can be applied (see \citealp{Elo1978}). The system was originally developed by the physicist Arpad Elo (1903–1992) to rate chess players. Since then, it has been applied to a wide range of sports, board games, and computer games. The main feature of the Elo rating system is that the increase (or decrease) of rating after a player's win (or loss) depends on the rating of the opponent---losing to a strong opponent does not reduce the rating by much, while winning against a strong opponent yields a significant boost. \cite{Aldous2017} noted that the Elo rating system is rather a neglected topic in the field of probability and statistics, despite offering many interesting research questions. One recent research direction is the generalization of the Elo rating system to handle richer game outcomes than a simple win or loss. For example, \cite{Szczecinski2020} generalized Elo to accommodate draws, \cite{Ingram2021} proposed a generalization for the margin of victory, \cite{Szczecinski2022} addressed outcomes in the form of ordinal categories, and \cite{Powell2023} developed a generalization for multiple players. \emph{Our second goal is to generalize the Elo rating to acommodate game outcomes in any form using the score.}

The remainder of the paper is organized as follows. In Section \ref{sec:elo}, we review the classical Elo rating system for game outcomes in the form of a win or a loss. In Section \ref{sec:score}, we propose a general rating system based on score, suitable for any game outcome that can be represented by a random variable and modeled using a probability distribution. In Section \ref{sec:equi}, we show that the Elo rating system is a special case of the score-driven rating system, provided that the logistic link function is used. In Section \ref{sec:exm}, we present examples of the score-driven rating system for specific game outcomes, such as the point difference between two players, the win/draw/loss result between two players, and the complete ranking of multiple players. In Section \ref{sec:prop}, we prove that, under suitable conditions, the score has several desirable properties, such as zero expected value, summing to zero over all players, and decreasing with the value of a player’s rating. In Section \ref{sec:dyn}, we demonstrate that the score-driven rating system possesses reversion dynamics, making it suitable as an estimate of the unobserved time-varying true skill of players. Finally, we conclude the paper in Section \ref{sec:con}.

\begin{figure}
\centering
\includegraphics[scale=0.55]{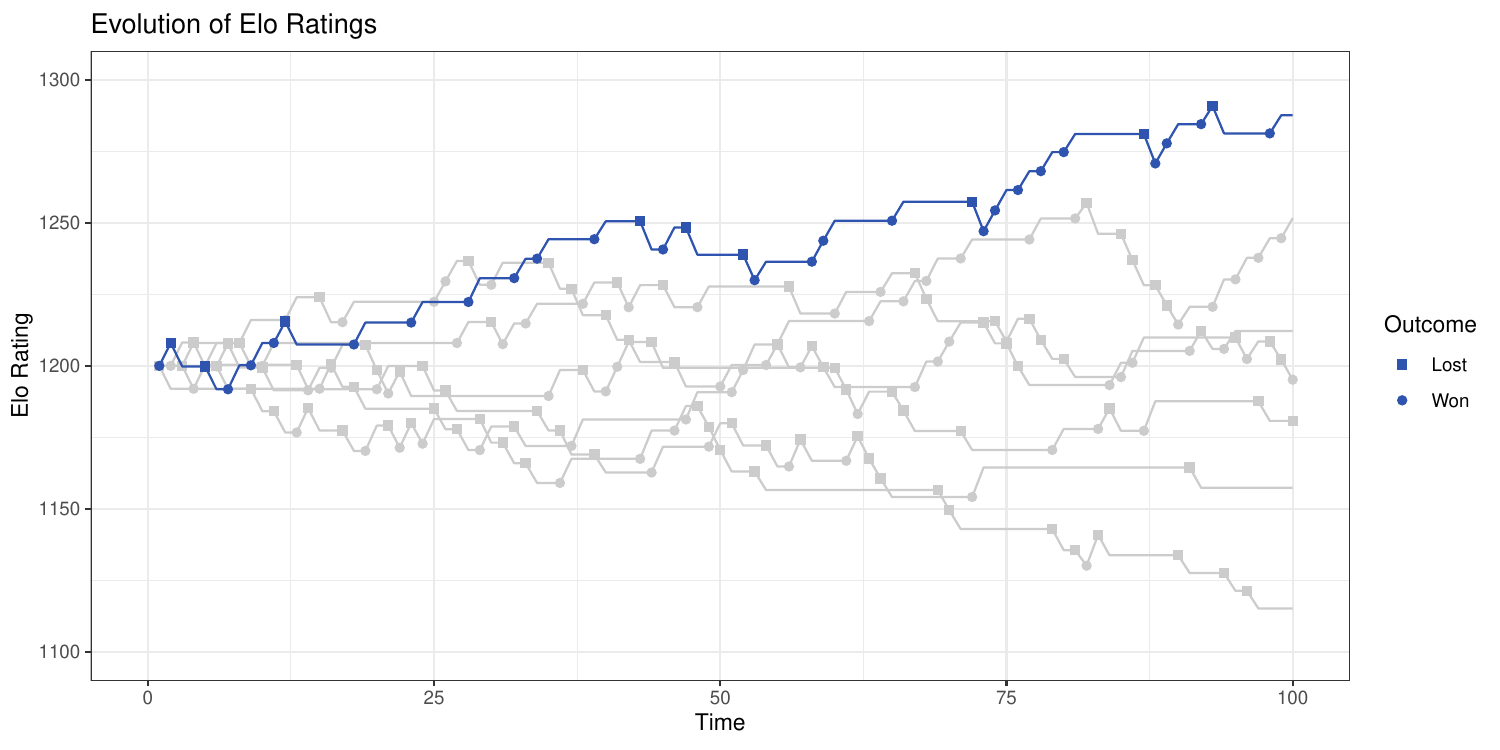}
\caption{Simulated paths of Elo ratings for seven players, with one player highlighted.}
\label{fig:example}
\end{figure}

\section{The Elo Rating System}
\label{sec:elo}

We present the standard Elo rating system first. Suppose there are $n$ players who play games against each other over time. Specifically, at time $t$, players $A$ and $B$ play a game in which one player wins and the other loses. The mechanism by which players $A$ and $B$ are selected to play at time $t$ is not considered here and we treat the pair $A$, $B$ as given. Let the outcome of the game be denoted by $y_t$, which can be
\begin{equation}
\label{eq:eloOutcome}
\begin{aligned}
\text{either } y_t &= \begin{pmatrix} y^{(A)}_t = 1 \\ y^{(B)}_t = 0 \\ \end{pmatrix} \text{ when player $A$ wins and player $B$ loses}, \\
\text{or } y_t &= \begin{pmatrix} y^{(A)}_t = 0 \\ y^{(B)}_t = 1 \\ \end{pmatrix} \text{ when player $A$ loses and player $B$ wins}. \\
\end{aligned}
\end{equation}

The goal is to assign a rating $r_t^{(i)}$ to each player $i \in \{ 1, 2, \ldots, n \}$ at each time $t \in \{ 1, 2, \ldots \}$. When the $t^{\text{th}}$ game takes place between players $A$ and $B$, and its outcome $y_t$ is observed, the ratings are updated using the formulas
\begin{equation}
\label{eq:eloUpdate}
\begin{aligned}
r_{t+1}^{(A)} &= r_{t}^{(A)} + 16 \left( y_t^{(A)} - \frac{1}{1 + 10^{- \left( r_{t}^{(A)} - r_{t}^{(B)} \right) / 400}} \right), \\
r_{t+1}^{(B)} &= r_{t}^{(B)} + 16 \left( y_t^{(B)} - \frac{1}{1 + 10^{\left( r_{t}^{(A)} - r_{t}^{(B)} \right) / 400}} \right), \\
r_{t+1}^{(j)} &= r_{t}^{(j)}, & \text{for } j \not\in \{A, B \}. \\
\end{aligned}
\end{equation}
The constant of 16 is called the K-factor, and it controls how quickly a player's rating changes after each game. Its value is chosen by convention; for example, 16 is often used for advanced players, while 32 may be used for weaker or provisional players. The scaling constant of 400 and the logarithm base of 10 are also chosen by convention. Initially, at time $t = 1$, all players are assigned the same rating
\begin{equation}
\label{eq:eloInit}
r_1^{(1)} = \cdots = r_1^{(n)} = 1~200.
\end{equation}
Again, this is an arbitrarily chosen constant. An example of the evolution of Elo ratings over time is illustrated in Figure \ref{fig:example}.

The rationale behind the Elo rating system lies in the assumption that the outcome of a game between players $A$ and $B$ is a random variable $Y_t$ with probability mass function
\begin{equation}
\label{eq:eloProb}
\begin{aligned}
\mathrm{P} \left[ Y_t = (1, 0)^\intercal \mid r_{t}^{(A)}, r_{t}^{(B)} \right] &= \mathrm{E} \left[ Y_t^{(A)} \mid r_{t}^{(A)}, r_{t}^{(B)} \right] = \frac{1}{1 + 10^{-\left( r_{t}^{(A)} - r_{t}^{(B)} \right) / 400}}, \\
\mathrm{P} \left[ Y_t = (0, 1)^\intercal \mid r_{t}^{(A)}, r_{t}^{(B)} \right] &= \mathrm{E} \left[ Y_t^{(B)} \mid r_{t}^{(A)}, r_{t}^{(B)} \right] = \frac{1}{1 + 10^{\left( r_{t}^{(A)} - r_{t}^{(B)} \right) / 400}}. \\
\end{aligned}
\end{equation}
As a result, \eqref{eq:eloUpdate} can be rewritten as
\begin{equation}
\label{eq:eloUpdateExpect}
\begin{aligned}
r_{t+1}^{(A)} &= r_{t}^{(A)} + 16 \left( y_t^{(A)} - \mathrm{E} \left[ Y_t^{(A)} \mid r_{t}^{(A)}, r_{t}^{(B)} \right] \right), \\
r_{t+1}^{(B)} &= r_{t}^{(B)} + 16 \left( y_t^{(B)} - \mathrm{E} \left[ Y_t^{(B)} \mid r_{t}^{(A)}, r_{t}^{(B)} \right] \right), \\
r_{t+1}^{(j)} &= r_{t}^{(j)}, & \text{for } j \not\in \{A, B \}.  \\
\end{aligned}
\end{equation}

\begin{remark}
The assumption \eqref{eq:eloProb} is a crucial assumption underlying the Elo rating system. Of course, in reality, it is not clear whether this assumption holds (although it is testable using the observed outcomes $y_t$). In other words: \emph{Is it really true that if the Elo rating system assigns players the ratings $r_{t}^{(A)}$ and $r_{t}^{(B)}$, then the true probabilities are indeed given by \eqref{eq:eloProb}? Are the true probabilities actually a function of the Elo ratings $r_{t}^{(A)}$ and $r_{t}^{(B)}$? Should a chess player, before a match, look at the Elo rating and think, ``Well, Elo says my rating is low, so I am going to play worse?'' What if the Elo ratings are incorrect?} In Section \ref{sec:dyn}, we show that the problem with this strong assumption need not be that serious.
\end{remark}

\begin{remark}
In Elo terminology, $y_t^{(i)}$ is called the actual score of player $i$, while $\mathrm{E} \left[ Y_t^{(i)} \right]$ is referred to as the expected score (see \citealp{Elo1978}). In statistics, however, the term score conventionally denotes the gradient of the log-likelihood. Throughout this paper, we adopt the statistical convention and use the term score exclusively to refer to the gradient of the log-likelihood. We refer to $y_t^{(i)}$ and $\mathrm{E} \left[ Y_t^{(i)} \right]$ as the observed and the expected outcomes, respectively.
\end{remark}

\section{The Score-Driven Rating System}
\label{sec:score}

Let us now establish a general score-driven rating system. We allow for any game outcome that can be expressed as a univariate or multivariate, discrete or continuous, random variable $Y_t \in \mathcal{Y}$. Let $y_t$ denote its observed value. Our goal remains to assign a rating $r_t = \left( r_t^{(1)}, \ldots, r_t^{(n)} \right)^\intercal$ to all players at each time $t$. We assume that the distribution of $Y_t$ has a known structure and depends on the parameters $r_t$. Let $f(y_t \mid r_t)$ denote the probability mass function in the case of discrete $Y_t$, or the density function in the case of continuous $Y_t$. Furthermore, let $f(y_t \mid r_t)$ be differentiable with respect to $r_t$, and have support independent of $r_t$.

The score-driving rating is defined as
\begin{equation}
\label{eq:sdUpdate}
r_{t+1}^{(i)} = r_{t}^{(i)} + K \nabla_i(r_t; y_t), \quad \text{for } i = 1, \ldots, n,
\end{equation}
where $K > 0$ is the K-factor and $\nabla_i(r_t; y_t)$ is the score for the parameter $r_{t}^{(i)}$, given by
\begin{equation}
\label{eq:sdScore}
\nabla_i(r_t; y_t) = \frac{\partial \ln f \left( y_t \mid r_t \right)}{\partial r_{t}^{(i)}}.
\end{equation}
The initial ratings are set to
\begin{equation}
\label{eq:sdInit}
r_1^{(1)} = \cdots = r_1^{(n)} = r_{\text{init}}.
\end{equation}
The difference from the Elo rating system presented in Section \ref{sec:elo} is that the score-driven rating system uses $\nabla_i(r_t; y_t)$ as the update term, whereas \eqref{eq:eloUpdateExpect} uses the difference between the observed and expected outcomes, $y_t^{(i)} - \mathrm{E} \left[ Y_t^{(i)} \right]$.

The score quantifies how the ratings should be adjusted to account for the observed game outcome $y_t$. It is the optimal updating term for the ratings in terms of maximizing the log-likelihood, as it points in the direction of the steepest ascent, providing the most natural local adjustment of the ratings given $y_t$. The score takes into account the shape of the entire distribution, not just the first moment. The first moment might not even exist or be meaningful, e.g., for categorical game outcomes (see \citealp{Szczecinski2022}). If $\nabla_i(r_t; y_t)$ is positive, then player $i$ overperformed in the $t^{\text{th}}$ game, and their future rating $r_{t+1}^{(i)}$ should be increased accordingly. In contrast, a negative $\nabla_i(r_t; y_t)$ indicates that player $i$ underperformed, resulting in a decrease of $r_{t+1}^{(i)}$. A zero $\nabla_i(r_t; y_t)$ indicates that player $i$ performed as expected, or that player $i$ did not participate in the $t^{\text{th}}$ game at all, in which case $\nabla_i(r_t; y_t)$ is also zero.

\begin{mainass}
For the score to serve as a meaningful update term with desirable properties, we require $f(y_t \mid r_t)$ to be a function of differences between ratings only, to be strictly log-concave, and to satisfy certain regularity conditions. Under these assumptions, the rating update has zero expected value, the sum of updates across all players is zero, and the update decreases as the player's rating increases. We examine these properties in detail in Section \ref{sec:prop}. But first, we compare the score-driven rating for the win/loss outcome with the traditional Elo rating.
\end{mainass}

\begin{example}[Win/Loss Between Two Players]
\label{exm:bernoulli}
For the simple win/loss outcome between two players, as discussed in Section \ref{sec:elo}, we can have
\begin{equation}
\label{eq:bernoulliProb}
f \left( y_t \mid r_{t}^{(A)}, r_{t}^{(B)} \right) = \frac{1}{1 + \exp \left( \alpha (-1)^{y_t^{(A)}} \left(r_{t}^{(A)} - r_{t}^{(B)} \right) \right)},
\end{equation}
where $\alpha > 0$ is the scaling constant. It is clear that $f$ depends on the ratings only through their difference, $r_{t}^{(A)} - r_{t}^{(B)}$, and it can be shown that it is strictly log-concave with respect to $r_{t}^{(A)}$ and $r_{t}^{(B)}$. The score is given by
\begin{equation}
\label{eq:bernoulliScore}
\begin{aligned}
\nabla_A \left( r_{t}^{(A)}, r_{t}^{(B)}; y_t \right) &= \frac{-\alpha (-1)^{y_t^{(A)}}}{1 + \exp \left( - \alpha (-1)^{y_t^{(A)}} \left(r_{t}^{(A)} - r_{t}^{(B)} \right) \right)}, \\
\nabla_B \left( r_{t}^{(A)}, r_{t}^{(B)}; y_t \right) &= \frac{\alpha (-1)^{y_t^{(A)}}}{1 + \exp \left( - \alpha (-1)^{y_t^{(A)}} \left(r_{t}^{(A)} - r_{t}^{(B)} \right) \right)}, \\
\nabla_j \left( r_{t}^{(A)}, r_{t}^{(B)}; y_t \right) &= 0, & \text{for } j \not\in \{A, B \}. \\
\end{aligned}
\end{equation}
The score function is illustrated in Figure \ref{fig:bernoulliScore}. The score is bounded above by $\alpha$ and below by $-\alpha$. It is positive if the player wins and negative if the player loses.
\end{example}

\begin{figure}
\centering
\includegraphics[scale=0.55]{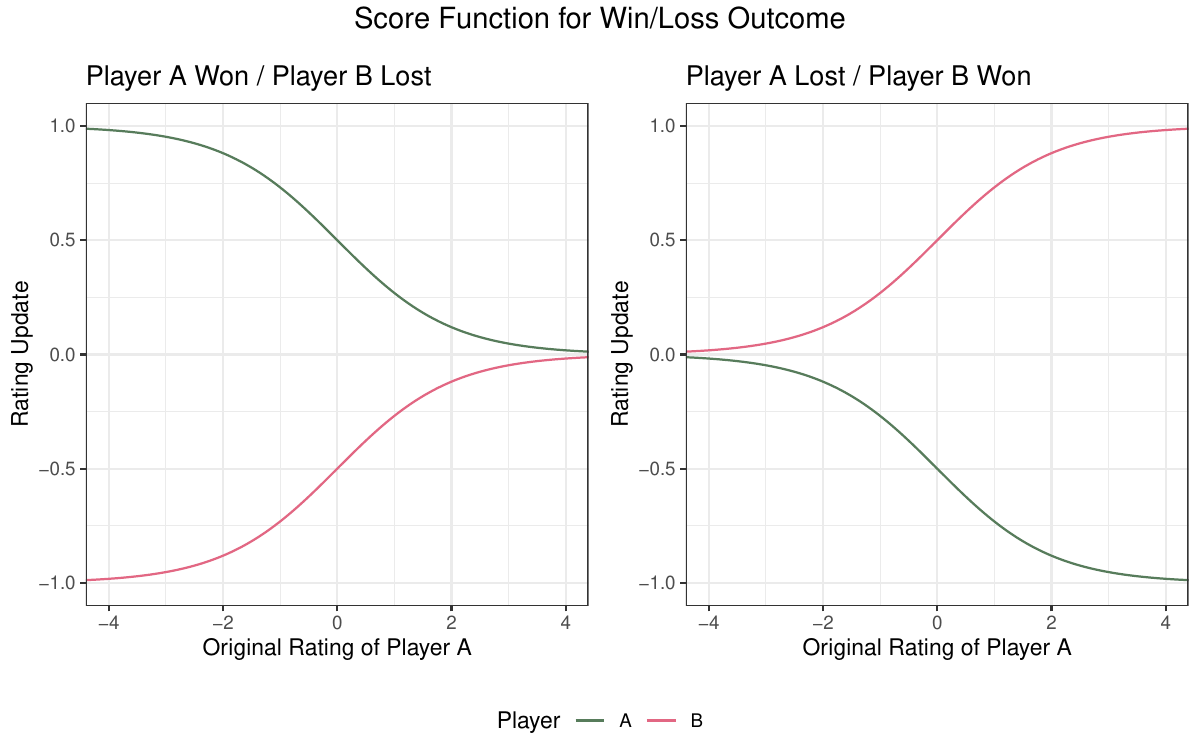}
\caption{The score function for a win/loss game outcome, modeled by \eqref{eq:bernoulliProb} with $\alpha = 1$. The rating for player $A$ is displayed on the x-axis, while it is fixed at 0 for player $B$.}
\label{fig:bernoulliScore}
\end{figure}

\section{Revisiting the Elo Rating}
\label{sec:equi}

Formulas \eqref{eq:eloUpdate} illustrate how the Elo rating system is usually described in the sports literature. For our convenience, it will be useful to write the same idea using a slightly more general notation. Let $\Lambda(\xi)$ be a link function that is smooth and increasing, with $\Lambda(\xi) \to 0$ as $\xi \to -\infty$ and $\Lambda(\xi) \to 1$ as $\xi \to \infty$, and that satisfies the symmetry property
\begin{equation}
\label{eq:symmetry}
\Lambda(-\xi) + \Lambda(\xi) = 1.
\end{equation}
The $\Lambda$-Elo rating system assumes 
\begin{equation}
\label{eq:eloProbLambda}
\begin{aligned}
\mathrm{P} \left[ Y_t = (1, 0)^\intercal \mid r_{t}^{(A)}, r_{t}^{(B)} \right] &= \Lambda \left( \tilde{\alpha} \left( r_{t}^{(A)} - r_{t}^{(B)} \right) \right), \\
\mathrm{P} \left[ Y_t = (0, 1)^\intercal \mid r_{t}^{(A)}, r_{t}^{(B)} \right] &= \Lambda \left( - \tilde{\alpha} \left( r_{t}^{(A)} - r_{t}^{(B)} \right) \right). \\
\end{aligned}
\end{equation}
The $\Lambda$-Elo ratings then follow
\begin{equation}
\label{eq:eloUpdateLambda}
\begin{aligned}
r_{t+1}^{(A)} &= r_{t}^{(A)} + \tilde{K} \left( y_t^{(A)} - \Lambda \left( \tilde{\alpha} \left( r_{t}^{(A)} - r_{t}^{(B)} \right) \right) \right), \\
r_{t+1}^{(B)} &= r_{t}^{(B)} + \tilde{K} \left( y_t^{(B)} - \Lambda \left( - \tilde{\alpha} \left( r_{t}^{(A)} - r_{t}^{(B)} \right) \right) \right), \\
r_{t+1}^{(j)} &= r_{t}^{(j)}, & \text{for } j \not\in \{A, B \}.  \\
\end{aligned}
\end{equation}
It is easy to see that \eqref{eq:eloUpdate} falls into this setup, for example when
\begin{equation}
\tilde{K} = 16, \qquad \tilde{\alpha} = \frac{\ln 10}{400}, \qquad \Lambda(\xi) = \frac{1}{1 + e^{-\xi}}.
\end{equation}
The logistic link function clearly satisfies the symmetry property \eqref{eq:symmetry}.

Originally, however, the Elo rating system was based on the normal distribution (see \citealp{Elo1978}). Such system also falls within the $\Lambda$-Elo framework, with $\Lambda(\xi)$ representing the cumulative distribution function of the normal distribution. The logistic function was later adopted purely for convenience, as an easy-to-compute approximation of the normal cumulative distribution function. In the following property, we show that the now-widely used logistic link has an additional motivation: the Elo rating system is a special case of the score-driven rating system if and only if the logistic link is employed.

\begin{proposition}[Elo Rating with Logistic Link Is Score-Driven Rating]
Assuming the probability mass function takes the form \eqref{eq:eloProbLambda}, the $\Lambda$-Elo rating system \eqref{eq:eloUpdateLambda} coincides with the score-driven rating system \eqref{eq:sdUpdate} if and only if $\Lambda(\xi)$ is the logistic function.
\end{proposition}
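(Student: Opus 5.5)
We compute the score of the mass function \eqref{eq:eloProbLambda} directly and compare it with the $\Lambda$-Elo update. Write $d = r_t^{(A)} - r_t^{(B)}$ and $\lambda = \Lambda'$. Using \eqref{eq:symmetry}, the two outcome probabilities combine into a single expression:
\begin{equation*}
f(y_t \mid r_t) = \Lambda(\tilde{\alpha} d)^{y_t^{(A)}} \Lambda(-\tilde{\alpha} d)^{y_t^{(B)}}, \qquad y_t^{(A)} + y_t^{(B)} = 1 .
\end{equation*}
Differentiating the logarithm with respect to $r_t^{(A)}$ gives
\begin{equation*}
\nabla_A = \tilde{\alpha}\,\lambda(\tilde{\alpha} d)\left( \frac{y_t^{(A)}}{\Lambda(\tilde{\alpha} d)} - \frac{y_t^{(B)}}{\Lambda(-\tilde{\alpha} d)} \right).
\end{equation*}
Here we used $\lambda(-\xi) = \lambda(\xi)$, which follows by differentiating \eqref{eq:symmetry}. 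Substituting $y_t^{(B)} = 1 - y_t^{(A)}$ and $\Lambda(-\tilde{\alpha} d) = 1 - \Lambda(\tilde{\alpha} d)$ and simplifying yields
\begin{equation*}
\nabla_A = \frac{\tilde{\alpha}\,\lambda(\tilde{\alpha} d)}{\Lambda(\tilde{\alpha} d)\bigl(1 - \Lambda(\tilde{\alpha} d)\bigr)} \left( y_t^{(A)} - \Lambda(\tilde{\alpha} d) \right).
\end{equation*}
By symmetry, $\nabla_B$ has the same prefactor multiplying $y_t^{(B)} - \Lambda(-\tilde{\alpha} d)$, and $\nabla_j = 0$ for all other players. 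So the score always has the form of the Elo update, $y - \mathrm{E}[Y]$, multiplied by a weight $w(\xi) = \tilde{\alpha}\lambda(\xi)/\bigl(\Lambda(\xi)(1-\Lambda(\xi))\bigr)$ evaluated at $\xi = \tilde{\alpha} d$.

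The two systems coincide when $K \nabla_i = \tilde{K}\bigl(y_t^{(i)} - \mathrm{E}[Y_t^{(i)}]\bigr)$ holds for every rating configuration and every outcome. Since $y_t^{(A)} - \Lambda(\tilde{\alpha} d) \neq 0$ (because $0 < \Lambda < 1$), this is equivalent to $K w(\xi) = \tilde{K}$ for all $\xi$ in the range of $\tilde{\alpha} d$, which is all of $\mathbb{R}$. In other words, coincidence holds if and only if $\Lambda$ solves the ODE
\begin{equation*}
\Lambda' = c\,\Lambda(1-\Lambda), \qquad c = \frac{\tilde{K}}{K\tilde{\alpha}} > 0 .
\end{equation*}

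For the "if" direction, take $\Lambda(\xi) = 1/(1+e^{-\xi})$. It satisfies $\Lambda' = \Lambda(1-\Lambda)$, so $w \equiv \tilde{\alpha}$, and the choice $K = \tilde{K}/\tilde{\alpha}$ reproduces \eqref{eq:eloUpdateLambda} exactly.

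For the "only if" direction, we solve the logistic ODE. Separating variables gives $\ln\bigl(\Lambda/(1-\Lambda)\bigr) = c\xi + b$, that is, $\Lambda(\xi) = 1/\bigl(1 + e^{-(c\xi + b)}\bigr)$. The symmetry condition \eqref{eq:symmetry} at $\xi = 0$ gives $\Lambda(0) = 1/2$, which forces $b = 0$. The remaining constant $c$ can be absorbed into the scaling $\tilde{\alpha}$, so $\Lambda$ is the logistic function up to reparametrisation of its argument.

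The main obstacle is interpretational rather than computational. "Coincides" has to be read as "there exists $K > 0$ such that the updates agree for all ratings and outcomes." "The logistic function" must then be understood modulo the scale factor $c$, since $\Lambda(c\xi)$ with $\tilde{\alpha}$ rescaled describes the same model. I would state both conventions explicitly at the start of the proof.
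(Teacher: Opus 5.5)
Your proof is correct. It rests on the same key fact as the paper's, namely that matching the score to the Elo update amounts to the logistic differential equation, but it is organized more completely.

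\textbf{How the paper argues.} It computes the score only for the logistic link and verifies the match with $\tilde K = \alpha K$. For the converse it merely asserts that $\Lambda' = \Lambda(1-\Lambda)$ is ``the crucial property'' and that its only solution is the logistic function. It never shows that coincidence actually forces this equation.

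\textbf{What your route adds.} You compute the score for an arbitrary symmetric link and obtain $\nabla_A = w(\tilde\alpha d)\,\bigl(y_t^{(A)} - \Lambda(\tilde\alpha d)\bigr)$ with $w(\xi) = \tilde\alpha\Lambda'(\xi)/\bigl(\Lambda(\xi)(1-\Lambda(\xi))\bigr)$. This turns coincidence into the exact condition $K w \equiv \tilde K$ on $\mathbb{R}$, so you genuinely prove the ``only if'' direction rather than assert it.

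Your version also repairs two imprecisions in the paper's converse:
\begin{enumerate}
\item The ODE's positive solutions include shifts $1/(1+e^{-(c\xi+b)})$. You correctly eliminate the shift using $\Lambda(0)=1/2$ from the symmetry condition.
\item The rate $c = \tilde K/(K\tilde\alpha)$ is not pinned down, because $K$ is free. For example, $\Lambda(\xi) = 1/(1+e^{-2\xi})$ also yields coincidence with $K = \tilde K/(2\tilde\alpha)$.
\end{enumerate}
The second point means the proposition is true only if ``the logistic function'' is read modulo scaling of its argument. That is exactly the convention you propose to state explicitly.

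The paper's route is shorter for the ``if'' direction. Yours costs one extra computation but delivers a rigorous characterization of all admissible links.
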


\begin{proof}
First, we show that \eqref{eq:sdUpdate} corresponds to \eqref{eq:eloUpdateLambda} when the logistic link function is used. The logistic function satisfies
\begin{equation}
\label{eq:logitDeriv1}
\frac{\partial \Lambda(\xi)}{\partial \xi} = \Lambda(\xi) \left(1 - \Lambda(\xi) \right) \qquad \text{and} \qquad \frac{\partial \ln \Lambda(\xi)}{\partial \xi} = 1 - \Lambda(\xi).
\end{equation}
Using the symmetry property \eqref{eq:symmetry}, we can further rewrite this as
\begin{equation}
\label{eq:logitDeriv2}
\frac{\partial \Lambda(\xi)}{\partial \xi} = \Lambda(\xi) \Lambda(-\xi) \qquad \text{and} \qquad \frac{\partial \ln \Lambda(\xi)}{\partial \xi} = \Lambda(-\xi).
\end{equation}
If player $A$ wins, then
\begin{equation}
\label{eq:equalElo1}
\begin{aligned}
\nabla_A \left( r_{t}^{(A)}, r_{t}^{(B)}; y_t \right) &= \frac{\partial \ln \Lambda \left( \alpha \left( r_{t}^{(A)} - r_{t}^{(B)} \right) \right)}{\partial r_{t}^{(A)}} = \alpha \left( 1 - \Lambda \left( \alpha \left( r_{t}^{(A)} - r_{t}^{(B)} \right) \right) \right), \\
\nabla_B \left( r_{t}^{(A)}, r_{t}^{(B)}; y_t \right)  &= \frac{\partial \ln \Lambda \left( \alpha \left( r_{t}^{(A)} - r_{t}^{(B)} \right) \right)}{\partial r_{t}^{(B)}} = \alpha \left( - \Lambda \left( - \alpha \left( r_{t}^{(A)} - r_{t}^{(B)} \right) \right) \right), \\
\end{aligned}
\end{equation}
where we have used \eqref{eq:logitDeriv1} and \eqref{eq:logitDeriv2}, respectively. On the other hand, if player $A$ loses, then
\begin{equation}
\label{eq:equalElo2}
\begin{aligned}
\nabla_A \left( r_{t}^{(A)}, r_{t}^{(B)}; y_t \right) &= \frac{\partial \ln \Lambda \left( - \alpha \left( r_{t}^{(A)} - r_{t}^{(B)} \right) \right)}{\partial r_{t}^{(A)}} = \alpha \left( - \Lambda \left( \alpha \left( r_{t}^{(A)} - r_{t}^{(B)} \right) \right) \right), \\
\nabla_B \left( r_{t}^{(A)}, r_{t}^{(B)}; y_t \right)  &= \frac{\partial \ln \Lambda \left( - \alpha \left( r_{t}^{(A)} - r_{t}^{(B)} \right) \right)}{\partial r_{t}^{(B)}} = \alpha \left( 1 - \Lambda \left( - \alpha \left( r_{t}^{(A)} - r_{t}^{(B)} \right) \right) \right). \\
\end{aligned}
\end{equation}
By setting $\tilde{K} = \alpha K$ and $\tilde{\alpha} = \alpha$, we obtain equality between \eqref{eq:sdUpdate} and \eqref{eq:eloUpdateLambda}.

The crucial property of the link function $\Lambda(\xi)$ required for \eqref{eq:equalElo1} and \eqref{eq:equalElo2} to hold is given in \eqref{eq:logitDeriv1}. However, \eqref{eq:logitDeriv1} is the well-known logistic differential equation, whose only solution is the logistic function. Thus, no other choice of $\Lambda(\xi)$ can produce equality between \eqref{eq:sdUpdate} and \eqref{eq:eloUpdateLambda}.
\end{proof}

\section{Other Notable Examples}
\label{sec:exm}

We present three directions in which the classical Elo rating system can be generalized or adjusted using the score-driven approach. Although variants of these models exist in the literature, we derive and present them here within a unified framework.

\begin{example}[Margin of Victory Between Two Players]
\label{exm:skellam}
In sports such as football, basketball, or ice hockey, games may end in a win, loss, or draw, with outcomes determined by the points scored by each team. One way to generalize the Elo rating system is to account for the margin of victory---\emph{was the win decisive or narrow?}

We let the results of games be represented by the scored points. Following the notation from previous sections, the observed outcome of game $t$ between players $A$ and $B$ can be denoted as
\begin{equation}
\label{eq:skellamOutcome}
y_t = \left( y_t^{(A)}, y_t^{(B)} \right)^\intercal, \qquad y_t^{(A)}, y_t^{(B)} \in \mathbb{Z},
\end{equation}
where $y_t^{(i)}$ denotes the scored points of player $i$. This form of outcome is richer than we need, as this example focuses only on the difference between the points.

We assume that $Y_t^{(A)} - Y_t^{(B)}$ follows the Skellam distribution, which describes the difference between two independent Poisson-distributed variables (see \citealp{Skellam1946}). In our case, the two variables represent the number of points scored by players $A$ and $B$, respectively, with rates $\lambda_A$ and $\lambda_B$ given by
\begin{equation}
\label{eq:skellamRate}
\lambda_A = \exp \left(\alpha \left( r_{t}^{(A)} - r_{t}^{(B)} \right) \right) \qquad \text{and} \qquad \lambda_B = \exp \left(-\alpha \left( r_{t}^{(A)} - r_{t}^{(B)} \right) \right).
\end{equation}
The probability mass function is then
\begin{equation}
\label{eq:skellamProb}
f \left( y_t \mid r_{t}^{(A)}, r_{t}^{(B)} \right) = \exp \left( \alpha \left( r_{t}^{(A)} - r_{t}^{(B)} \right) \left( y_t^{(A)} - y_t^{(B)} \right) - 2 \cosh \left( \alpha \left( r_{t}^{(A)} - r_{t}^{(B)} \right) \right) \right) I_{y_t^{(A)} - y_t^{(B)}}(2),
\end{equation}
where $I_{\cdot}(\cdot)$ is the modified Bessel function of the first kind. This probability mass function satisfies our assumptions, as it is log-concave and depends on the ratings only through their difference. The first two moments are
\begin{equation}
\label{eq:skellamMoment}
\begin{aligned}
\mathrm{E} \left[ Y_t^{(A)} - Y_t^{(B)} \mid r_{t}^{(A)}, r_{t}^{(B)} \right] &= 2 \sinh \left( \alpha \left( r_{t}^{(A)} - r_{t}^{(B)} \right) \right), \\
\mathrm{var} \left[ Y_t^{(A)} - Y_t^{(B)} \mid r_{t}^{(A)}, r_{t}^{(B)} \right] &= 2 \cosh \left( \alpha \left( r_{t}^{(A)} - r_{t}^{(B)} \right) \right).
\end{aligned}
\end{equation}
The score is given by
\begin{equation}
\label{eq:skellamScore}
\begin{aligned}
\nabla_A \left( r_{t}^{(A)}, r_{t}^{(B)}; y_t \right) &= \alpha \left( y_t^{(A)} - y_t^{(B)} - 2 \sinh \left( \alpha \left( r_{t}^{(A)} - r_{t}^{(B)} \right) \right) \right), \\
\nabla_B \left( r_{t}^{(A)}, r_{t}^{(B)}; y_t \right) &= - \nabla_A \left( r_{t}^{(A)}, r_{t}^{(B)}; y_t \right), \\
\nabla_j \left( r_{t}^{(A)}, r_{t}^{(B)}; y_t \right) &= 0, & \text{for } j \not\in \{A, B \}. \\
\end{aligned}
\end{equation}
The score is unbounded, as illustrated in Figure \ref{fig:skellamScore}. Interestingly, a player may lose rating even after winning if the margin of victory is small and the opponent's rating is substantially lower, since the model expects a more decisive win.

A score-driven model based on the Skellam distribution was employed by \cite{Koopman2019} and \cite{Lasek2021}. Both papers also consider a more general model based on the bivariate Poisson distribution, which introduces positive correlation between $Y_t^{(A)}$ and $Y_t^{(B)}$ through an additional parameter. This model further allows for a distinction between the offensive and defensive ratings of each player. However, when the model is formulated using a single rating per player, as in \eqref{eq:skellamRate}, the score coincides with \eqref{eq:skellamScore}.
\end{example}


\begin{figure}
\centering
\includegraphics[scale=0.55]{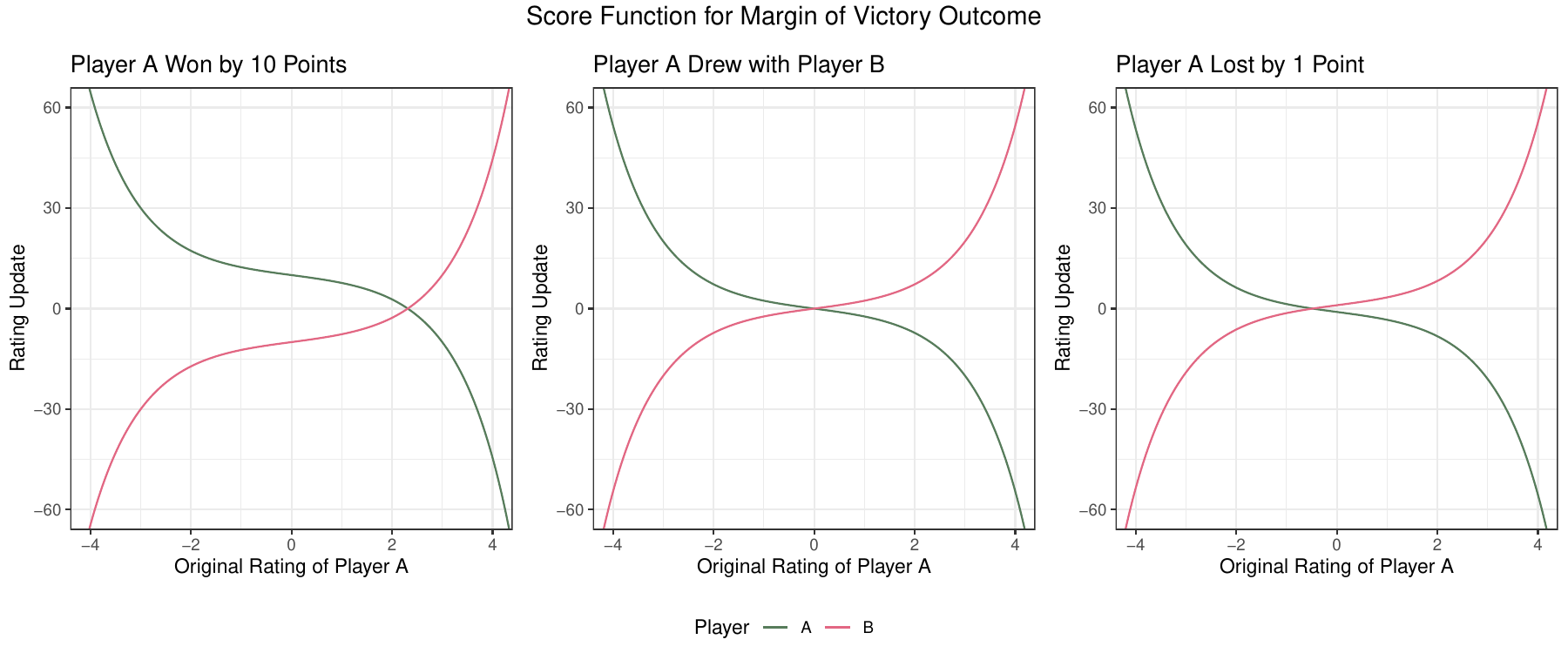}
\caption{The score function for a margin of victory game outcome, modeled by \eqref{eq:skellamProb} with $\alpha = 1$. The rating for player $A$ is displayed on the x-axis, while it is fixed at 0 for player $B$.}
\label{fig:skellamScore}
\end{figure}

\begin{example}[Win/Draw/Loss Between Two Players]
\label{exm:ordit}

In many tournament formats---such as those used in football, basketball, or ice hockey---only the outcome in the form of a win, draw, or loss can be of primary relevance, regardless of the point difference. Moreover, other sports and games that do not rely on points can also result in a draw. For instance, chess provides several distinct mechanisms through which a game may end in a draw.

We extend the win/loss outcome \eqref{eq:eloOutcome} from the standard Elo rating system to be
\begin{equation}
\label{eq:orditOutcome}
\begin{aligned}
\text{either } y_t &= \begin{pmatrix} y^{(A)}_t = 1 \\ y^{(B)}_t = 0 \\ \end{pmatrix} \text{ when player $A$ wins and player $B$ loses}, \\
\text{or } y_t &= \begin{pmatrix} y^{(A)}_t = 0.5 \\ y^{(B)}_t = 0.5 \\ \end{pmatrix} \text{ when there is a draw between players $A$ and $B$}, \\
\text{or } y_t &= \begin{pmatrix} y^{(A)}_t = 0 \\ y^{(B)}_t = 1 \\ \end{pmatrix} \text{ when player $A$ loses and player $B$ wins}. \\
\end{aligned}
\end{equation}
The numerical values of  $y_t^{(i)}$ themselves are not important in our case; only the order of the three categories matters. Using the score as the update term enables this, in contrast to relying on the expected outcome.

We incorporate draws using the ordered logit model, also known as the proportional odds logistic regression model (see \citealp{McCullagh1980}). For this, an additional parameter $\delta \geq 0$, determining the draw threshold, is required. The probability mass function is then
\begin{equation}
\label{eq:orditProb}
\begin{aligned}
f \left( y_t \mid r_{t}^{(A)}, r_{t}^{(B)} \right) &= \begin{cases} 
1 - \frac{1}{1 + \exp \left( - \delta + \alpha \left( r_t^{(A)} - r_t^{(B)} \right) \right)}, & \text{for } y_t = (1, 0)^\intercal, \\
\frac{1}{1 + \exp \left( - \delta + \alpha \left( r_t^{(A)} - r_t^{(B)} \right) \right)} - \frac{1}{1 + \exp \left( \delta + \alpha \left( r_t^{(A)} - r_t^{(B)} \right) \right)}, & \text{for } y_t = (0.5, 0.5)^\intercal, \\
\frac{1}{1 + \exp \left( \delta + \alpha \left( r_t^{(A)} - r_t^{(B)} \right) \right)}, & \text{for } y_t = (0, 1)^\intercal. \\
\end{cases}
\end{aligned}
\end{equation}
The larger the value of $\delta$, the greater the probability of a draw, with $\delta = 0$ corresponding to a zero probability of a draw. The probability mass function satisfies our required assumptions, being log-concave and dependent on the ratings only through their difference. The score is given by
\begin{equation}
\label{eq:orditScore}
\begin{aligned}
\nabla_A \left( r_{t}^{(A)}, r_{t}^{(B)}; y_t \right) &= \begin{cases} 
\alpha \frac{1}{1 + \exp \left( -\delta + \alpha \left( r_t^{(A)} - r_t^{(B)} \right) \right)}, & \text{for } y_t = (1, 0)^\intercal, \\
\alpha \frac{-\sinh \left( \alpha \left( r_t^{(A)} - r_t^{(B)} \right) \right)}{\cosh \left( \delta \right) + \cosh \left( \alpha \left( r_t^{(A)} - r_t^{(B)} \right) \right)}, & \text{for } y_t = (0.5, 0.5)^\intercal, \\
\alpha \frac{-1}{1 + \exp \left( -\delta - \alpha \left( r_t^{(A)} - r_t^{(B)} \right) \right)}, & \text{for } y_t = (0, 1)^\intercal, \\
\end{cases} \\
\nabla_B \left( r_{t}^{(A)}, r_{t}^{(B)}; y_t \right) &= - \nabla_A \left( r_{t}^{(A)}, r_{t}^{(B)}; y_t \right), \\
\nabla_j \left( r_{t}^{(A)}, r_{t}^{(B)}; y_t \right) &= 0, & \text{for } j \not\in \{A, B \}. \\
\end{aligned}
\end{equation}
As in Example \ref{exm:bernoulli}, the score lies in the interval $(0, \alpha)$ in the case of a win and in $(-\alpha, 0)$ in the case of a loss. In the event of a draw, it lies in $(-\alpha, \alpha)$, with the sign depending on the sign of the difference between the ratings, as shown in Figure \ref{fig:orditScore}.

This ordered logit model was used by \cite{Lasek2021}, whereas \cite{Koopman2019} employed an ordered probit model. However, draws can be incorporated into the underlying stochastic model in several different ways. For example, another model without an additional parameter can be derived from the assumption of the Skellam distribution for the point differences \eqref{eq:skellamProb}. Further models proposed in the literature include those by \cite{Szczecinski2020} and \cite{Glickman2025}.
\end{example}

\begin{figure}
\centering
\includegraphics[scale=0.55]{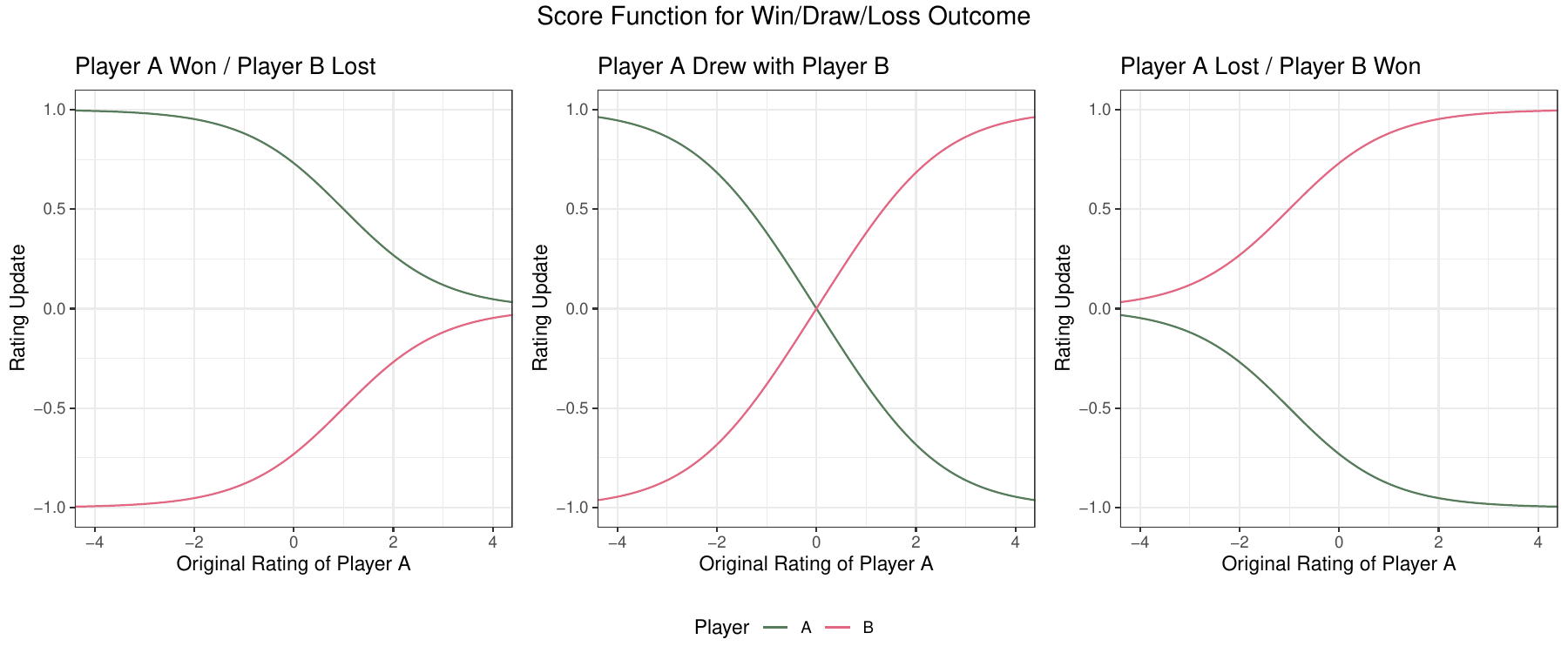}
\caption{The score function for a win/draw/loss game outcome, modeled by \eqref{eq:orditProb} with $\alpha = 1$ and $\delta = 1$. The rating for player $A$ is displayed on the x-axis, while it is fixed at 0 for player $B$.}
\label{fig:orditScore}
\end{figure}

\begin{example}[Ranking of Multiple Players]
\label{exm:pluce}
So far, we have considered outcomes of games between two players, $A$ and $B$. Many sports, such as running, swimming, cycling, or Formula racing, measure performance by time, from which the ranking of all participants is derived. Rankings are also produced in sports based on points, such as gymnastics or golf. Moreover, tournaments or leagues yield final standings---that is, the ranking of participants. In this example, we extend the win/loss outcome between two players to the ranking of multiple players.

Let us begin by defining the ranking outcome of games. We allow for the possibility that not all players participate. Let $\mathcal{M}_t$ denote the set of all participating players at time $t$, with $m_t = |\mathcal{M}_t|$ their number. The outcome is then the complete ranking of the participating teams, denoted as 
\begin{equation}
\label{eq:pluceOutcome}
y_t = \left( y_t^{(i)} : i \in \mathcal{M}_t \right), \qquad y_t^{(i)} \in \{ 1, \ldots, m_t \} \text{ for all } i, \qquad \text{and} \qquad y_t^{(j)} \neq y_t^{(k)} \text{ for all } j \neq k.
\end{equation}
Furthermore, we use the notation `$p$-th' for the player who has rank $p$, i.e., the number $i$ such that $y_t^{(i)} = p$. Note that this notation differs slightly from Example \ref{exm:bernoulli} with two players, where 1 denotes the first place (a win) and 0 denotes the second place (a loss).

We assume that the random ranking follows the Plackett--Luce distribution of \cite{Plackett1975} and \cite{Luce1959}. Its probability mass function is given by
\begin{equation}
\label{eq:pluceProb}
f \left( y_t \mid r_{t}^{(k)}, k \in \mathcal{M}_t \right) = \prod_{p=1}^{m_t} \frac{\exp \left( \alpha r_{t}^{(p\text{-th})} \right)}{\sum_{q = p}^{m_t} \exp \left( \alpha r_{t}^{(q\text{-th})} \right)}.
\end{equation}
In this model, the probability of placing first for each player $i$ is proportional to $\exp \left( \alpha r_{t}^{(i)} \right)$. The probability mass function is invariant under the addition of a constant to all ratings. We can thus subtract the rating of any player---for example, the one placing last---from all ratings and obtain
\begin{equation}
\label{eq:pluceProb2}
f \left( y_t \mid r_{t}^{(k)}, k \in \mathcal{M}_t \right) = \prod_{p=1}^{m_t-1} \frac{\exp \left( \alpha \left( r_{t}^{(p\text{-th})} - r_{t}^{(m_t\text{-th})} \right) \right)}{1 + \sum_{q = p}^{m_t-1} \exp \left( \alpha \left( r_{t}^{(q\text{-th})} - r_{t}^{(m_t\text{-th})} \right) \right)}.
\end{equation}
It is clear from \eqref{eq:pluceProb2} that the probability mass function depends on the ratings only through their difference. It can also be shown to be log-concave. When $m_t=2$, it reduces to \eqref{eq:bernoulliProb}, albeit with differently labeled outcomes, as noted above. The score is given by
\begin{equation}
\label{eq:pluceScore}
\begin{aligned}
\nabla_i \left( r_{t}^{(k)}, k \in \mathcal{M}_t; y_t \right) &= \alpha \left( 1 - \sum_{p=1}^{y_t^{(i)}} \frac{\exp \left( \alpha r_{t}^{(i)} \right)}{\sum_{q = p}^{m_t} \exp \left( \alpha r_{t}^{(q\text{-th})} \right)} \right), & & \text{for } i \in \mathcal{M}_t, \\
\nabla_j \left( r_{t}^{(k)}, k \in \mathcal{M}_t; y_t \right) &= 0, & & \text{for } j \not\in \mathcal{M}_t. \\
\end{aligned}
\end{equation}
The score lies in $(\alpha - \alpha r, \alpha)$ for players with ranks $r = 1, \ldots, m_t-1$, and in $(\alpha - \alpha m_t, 0)$ for the player ranked last (see \citealp{Holy2022f}). It is illustrated in Figure \ref{fig:pluceScore}.

A score-driven Plackett--Luce model was utilized by \cite{Holy2022f} and \cite{Holy2025a}. \cite{Holy2022f} also addressed the case of partial rankings, where only the top players are ranked, while the rest are unranked (but placed below them). Another extension would be to incorporate draws.

This example highlights an advantage of using the score. For some distributions, the score coincides with the expected value---for instance, in our Skellam model for the margin of victory. However, under the Plackett--Luce distribution, the expected value differs from the score and is not even available in closed form, but only as a sum over all permutations. The score is therefore more broadly applicable than the expected value.
\end{example}

\begin{figure}
\centering
\includegraphics[scale=0.55]{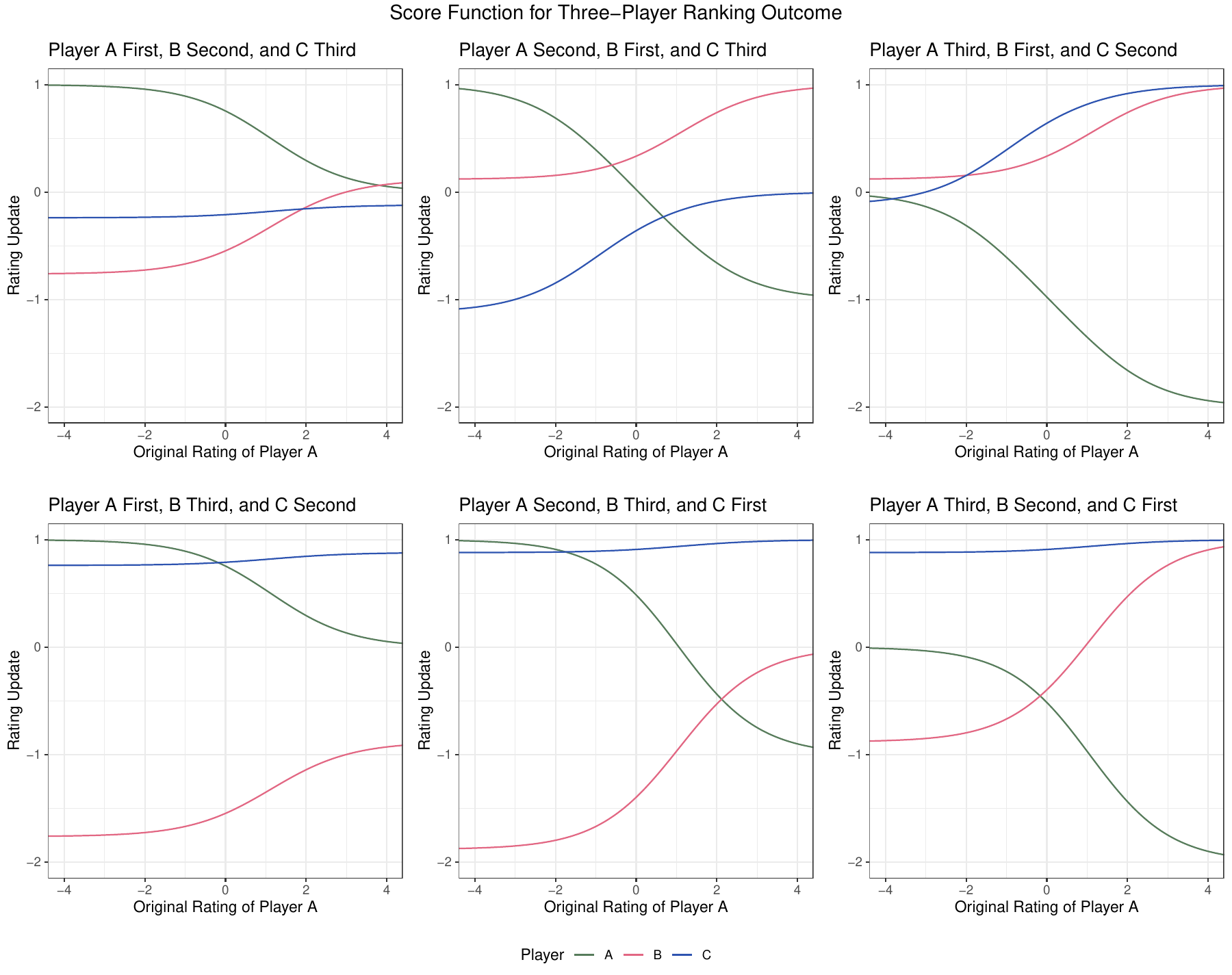}
\caption{The score function for a ranking game outcome with three players, modeled by \eqref{eq:pluceProb} with $\alpha = 1$. The rating for player $A$ is displayed on the x-axis, while it is fixed at $1$ for player $B$ and at $-1$ for player $C$.}
\label{fig:pluceScore}
\end{figure}

\section{Properties of the Score}
\label{sec:prop}

We show that the score has several important properties that make it suitable for updating ratings. First, the score has zero expected value. This means that, before a game, no player can expect an increase or decrease in their rating, regardless of their current rating or that of their opponent(s). Even a weak player facing a strong opponent has zero expected value, since the likely loss would reduce the rating only slightly, while a rare win would lead to a large increase. This property is based on the assumption that $f(y_t \mid r_t)$ is determined by $r_t$, i.e., that the ratings reflect the true skill levels of all players. We further discuss this strict assumption in Section \ref{sec:dyn}. The proof of this proposition is a classical result found in many statistics textbooks, see e.g.\ \cite{Schervish1995}.

\begin{proposition}[Zero Expected Score]
\label{prop:expected}
Let $f(y_t \mid r_t)$ be a differentiable function with support independent of $r_t$. Furthermore, assume that differentiation and integration of $f(y_t \mid r_t)$ are interchangeable. Then, the score has zero expected value, i.e.
\begin{equation}
\mathrm{E} \left[ \nabla_i(r_t; y_t) \mid r_t \right] = 0, \qquad i = 1, \ldots, n.
\end{equation}
\end{proposition}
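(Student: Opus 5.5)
The plan is the classical argument: write the expectation of the score as an integral (or sum) over the support $\mathcal{Y}$, and use $\nabla_i f / f \cdot f = \nabla_i f$ to cancel the density. Concretely, for continuous $Y_t$ (the discrete case is identical with sums in place of integrals) we compute
\begin{equation*}
\mathrm{E} \left[ \nabla_i(r_t; Y_t) \mid r_t \right] = \int_{\mathcal{Y}} \frac{\partial \ln f(y \mid r_t)}{\partial r_t^{(i)}} f(y \mid r_t) \, \mathrm{d}y = \int_{\mathcal{Y}} \frac{\partial f(y \mid r_t)}{\partial r_t^{(i)}} \, \mathrm{d}y .
\end{equation*}
This step uses differentiability of $f$ and the fact that on the support $f(y \mid r_t) > 0$, so the logarithm and the chain rule $\partial \ln f = \partial f / f$ are legitimate pointwise.

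Next, we invoke the interchangeability assumption to move the derivative outside the integral. Since the support $\mathcal{Y}$ does not depend on $r_t$, no boundary terms arise (in the Leibniz-rule sense), and we get
\begin{equation*}
\int_{\mathcal{Y}} \frac{\partial f(y \mid r_t)}{\partial r_t^{(i)}} \, \mathrm{d}y = \frac{\partial}{\partial r_t^{(i)}} \int_{\mathcal{Y}} f(y \mid r_t) \, \mathrm{d}y = \frac{\partial}{\partial r_t^{(i)}} 1 = 0,
\end{equation*}
because $f(\cdot \mid r_t)$ is a probability density (or mass function) for every value of $r_t$. This holds for each $i = 1, \ldots, n$.

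There is no real obstacle, since the delicate analytic point, exchanging differentiation and integration, is assumed outright. The only things to watch are the role of the fixed support, which ensures that the integration domain is the same set for all $r_t$ so differentiating the constant total mass is valid, and the trivial case of players not participating in game $t$. For such players $f$ does not depend on $r_t^{(i)}$, so the score is identically zero and the claim holds immediately. The argument above also covers them uniformly.
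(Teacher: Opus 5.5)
Your proposal is correct and follows essentially the same approach as the paper. It writes the expectation as an integral over $\mathcal{Y}$, cancels $f$ using $\partial \ln f/\partial r_t^{(i)} = (\partial f/\partial r_t^{(i)})/f$, swaps differentiation and integration by assumption, and differentiates the constant total mass $1$ to get zero.
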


\begin{proof}
For any $i$, it holds that
\begin{equation}
\begin{aligned}
\mathrm{E} \left[ \nabla_i(r_t; y_t) \mid r_t \right] &= \int_{\mathcal{Y}} \nabla_i(r_t; y_t) f(y_t \mid r_t) d y_t \\
&= \int_{\mathcal{Y}} \frac{\partial \ln f(y_t \mid r_t)}{\partial r_{t}^{(i)}} f(y_t \mid r_t) d y_t \\
&= \int_{\mathcal{Y}} \frac{\partial f(y_t \mid r_t)}{\partial r_{t}^{(i)}} \frac{1}{f(y_t \mid r_t)} f(y_t \mid r_t) d y_t \\
&= \int_{\mathcal{Y}} \frac{\partial f(y_t \mid r_t)}{\partial r_{t}^{(i)}} d y_t \\
&= \frac{\partial \int_{\mathcal{Y}} f(y_t \mid r_t) d y_t}{\partial r_{t}^{(i)}} \\
&= \frac{\partial 1}{\partial r_{t}^{(i)}} \\
&= 0.
\end{aligned}
\end{equation}
\end{proof}

Next, we show that the sum of scores over all players is zero for any game outcome. This means that, if the pool of players is stable over time, the sum of all ratings remains constant. There is no inflation or deflation in the ratings, and the average rating equals the value used for the initial ratings, $r_{\text{init}}$.

\begin{proposition}[Zero Sum of Scores]
\label{prop:sum}
Let $f(y_t \mid r_t)$ be a differentiable function with support independent of $r_t$. Furthermore, assume that $f(y_t \mid r_t)$ can be expressed as a
function of differences of ratings, i.e.
\begin{equation}
f(y_t \mid r_t) = f(y_t \mid d_t), \quad d_t = \left( r_t^{(1)} - r_t^{(n)}, \ldots, r_t^{(n-1)} - r_t^{(n)} \right)^\intercal.
\end{equation}
Then, the sum of all scores is zero for any outcome, i.e.
\begin{equation}
\sum_{i=1}^n \nabla_i(r_t; y_t) = 0, \qquad y_t \in \mathcal{Y}.
\end{equation}
\end{proposition}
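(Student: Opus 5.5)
The approach is to exploit translation invariance through the chain rule. Write $\ln f(y_t \mid r_t) = g(d_t)$ with $g(d) = \ln f(y_t \mid d)$ for fixed $y_t$; differentiability of $f$ in $d_t$ is implicit in the assumption that $f$ is a differentiable function of $d_t$. Each $d_t^{(k)} = r_t^{(k)} - r_t^{(n)}$ has $\partial d_t^{(k)}/\partial r_t^{(i)} = \mathbb{1}[i=k] - \mathbb{1}[i=n]$. Hence, for $i < n$,
\begin{equation*}
\nabla_i(r_t; y_t) = \frac{\partial g}{\partial d^{(i)}}(d_t),
\end{equation*}
while for $i = n$,
\begin{equation*}
\nabla_n(r_t; y_t) = -\sum_{k=1}^{n-1} \frac{\partial g}{\partial d^{(k)}}(d_t).
\end{equation*}
Summing over $i=1,\ldots,n$ gives zero immediately, and this holds pointwise for every $y_t \in \mathcal{Y}$. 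Taking the logarithm is legitimate because $y_t$ lies in the support, which does not depend on $r_t$, so $f(y_t \mid r_t) > 0$ there.

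An equivalent route avoids the coordinates $d_t$ altogether. Define $h(c) = \ln f(y_t \mid r_t + c\mathbf{1})$. Because the differences are unchanged by a common shift, $h$ is constant in $c$. By the chain rule, $h'(0) = \sum_i \nabla_i(r_t; y_t)$, and this must therefore vanish. I would present the chain-rule computation as the main proof and mention the directional-derivative view as the intuition.

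The only delicate point is making sure the chain rule applies, which requires $g$ to be differentiable as a function of $d_t$. This follows because $r_t \mapsto d_t$ is a linear surjection with a linear right inverse $d \mapsto (d, 0)$. Consequently $g(d) = \ln f(y_t \mid (d,0))$ inherits differentiability from $f$ in $r_t$, so no extra assumption is needed.
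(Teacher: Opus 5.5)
Your proposal is correct and matches the paper's proof: you apply the chain rule through $d_t$ to get $\nabla_i = \partial g/\partial d^{(i)}$ for $i<n$ and $\nabla_n = -\sum_{k<n}\partial g/\partial d^{(k)}$, so the sum vanishes for every $y_t$. Your justification that $g(d)=\ln f(y_t \mid (d,0))$ inherits differentiability is a sound detail the paper leaves implicit, and so is the equivalent shift-invariance argument via $h(c)$.
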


\begin{proof}
Using the chain rule, it holds that
\begin{equation}
\nabla_i(r_t; y_t)
= \frac{\partial \ln f(y_t \mid r_t)}{\partial r_t^{(i)}}
= \left( \frac{\partial \ln f(y_t \mid r_t)}{\partial \left( r_t^{(1)} - r_t^{(n)} \right)}, \ldots, \frac{\partial \ln f(y_t \mid r_t)}{\partial \left( r_t^{(n-1)} - r_t^{(n)} \right)} \right)
\begin{pmatrix}
\frac{\partial \left( r_t^{(1)} - r_t^{(n)} \right)}{\partial r_t^{(i)}} \\
\vdots \\
\frac{\partial \left( r_t^{(n-1)} - r_t^{(n)} \right)}{\partial r_t^{(i)}}
\end{pmatrix}.
\end{equation}
For $i = 1, \ldots, n-1$, this equals simply
\begin{equation}
\nabla_i(r_t; y_t) = \frac{\partial \ln f(y_t \mid r_t)}{\partial \left( r_t^{(i)} - r_t^{(n)} \right)},
\end{equation}
while for $i=n$, it is
\begin{equation}
\nabla_n(r_t; y_t) = - \sum_{j=1}^{n-1} \frac{\partial \ln f(y_t \mid r_t)}{\partial \left( r_t^{(j)} - r_t^{(n)} \right)}.
\end{equation}
Therefore, we have
\begin{equation}
\sum_{i=1}^n \nabla_i(r_t; y_t) = \sum_{j=1}^{n-1} \frac{\partial \ln f(y_t \mid r_t)}{\partial \left( r_t^{(i)} - r_t^{(n)} \right)} - \sum_{j=1}^{n-1} \frac{\partial \ln f(y_t \mid r_t)}{\partial \left( r_t^{(j)} - r_t^{(n)} \right)} = 0.
\end{equation}
\end{proof}

Another important property is that the score for player $i$ is a decreasing function of their own rating. For example, a win against a mediocre opponent results in a large increase in rating for a weaker player, but only a modest increase for an already strong player. Conversely, a loss against a mediocre opponent leads to a small decrease in rating for a weaker player, but a considerable decrease for a strong player. This property holds under the assumption that $\ln f(y_t \mid r_t)$ is strictly concave. Many commonly used probability distributions, such as those in the exponential family, are log-concave. Log-concavity ensures unimodality and guarantees that any local maximum is also the global maximum in maximum likelihood estimation.

\begin{proposition}[Decreasing Score for Own Rating Increasing]
\label{prop:decrease}
Let $f(y_t \mid r_t)$ be a twice differentiable function with support independent of $r_t$. Furthermore, assume that $f(y_t \mid r_t)$ is strictly log-concave. Then, $\nabla_i(r_t; y_t)$ is decreasing as $r_t^{(i)}$ increases, $i = 1, \ldots, n$.
\end{proposition}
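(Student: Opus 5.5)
The plan is to reduce the claim to a one-dimensional statement about a concave function restricted to a coordinate line. Fix an outcome $y_t \in \mathcal{Y}$, an index $i$, and all ratings $r_t^{(j)}$ with $j \neq i$. Define
\begin{equation*}
g(s) = \ln f\left(y_t \mid r_t + s e_i\right), \qquad s \in \mathbb{R},
\end{equation*}
where $e_i$ is the $i$-th unit vector. By the definition \eqref{eq:sdScore} and the chain rule, $g'(s) = \nabla_i(r_t + s e_i; y_t)$. Hence the statement is equivalent to $g'$ being decreasing in $s$.

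First, I would show that $g$ is strictly concave. The restriction of a strictly concave function to a line is strictly concave, since the defining inequality $\ln f(\lambda u + (1-\lambda)v) > \lambda \ln f(u) + (1-\lambda)\ln f(v)$ for $u \neq v$ passes directly to points $u = r_t + s_1 e_i$ and $v = r_t + s_2 e_i$ with $s_1 \neq s_2$. The support being independent of $r_t$ keeps $y_t$ in the support along the whole line, so $g$ is finite and twice differentiable there.

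Second, I would invoke the standard fact that a differentiable function is strictly concave if and only if its derivative is strictly decreasing. Concretely, for $s_1 < s_2$, the first-order characterisation of strict concavity gives
\begin{equation*}
g(s_2) < g(s_1) + g'(s_1)(s_2 - s_1)
\quad\text{and}\quad
g(s_1) < g(s_2) + g'(s_2)(s_1 - s_2).
\end{equation*}
Adding the two inequalities yields $\bigl(g'(s_1) - g'(s_2)\bigr)(s_2 - s_1) > 0$, that is, $g'(s_1) > g'(s_2)$. Equivalently, twice differentiability gives $g''(s) = \partial^2 \ln f / \partial (r_t^{(i)})^2 \le 0$, with strictness away from isolated points. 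I prefer the first-order route because it avoids the caveat that strict concavity does not force the Hessian to be negative definite everywhere.

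The main obstacle is interpretive rather than computational, and it concerns what ``strictly log-concave'' should mean. Under the paper's other key assumption that $f$ depends only on rating differences, $\ln f$ is constant along the direction $(1,\ldots,1)^\intercal$. It therefore cannot be strictly concave in $r_t$ jointly. Similarly, for a player $i$ not participating in game $t$, the score is identically zero, so it is only weakly decreasing. I would therefore state explicitly that strict log-concavity is understood with respect to the differences $d_t$ of the participating players, as in Proposition~\ref{prop:sum}. The argument then goes through unchanged as long as the line $s \mapsto r_t + s e_i$ is not parallel to $(1,\ldots,1)^\intercal$, which holds whenever at least two players take part. Along such a line $d_t$ moves along a nonzero direction, namely $e_i$ for $i < n$ and $-\mathbf{1}$ for $i = n$ in the coordinates of Proposition~\ref{prop:sum}. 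This gives strict concavity of $g$ and hence a strictly decreasing score for participating players, while non-participants have a constant zero score.
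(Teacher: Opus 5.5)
Your proof is correct, but it takes a different route from the paper's. The paper's proof is one line. It asserts that strict concavity of $\ln f$ gives $\partial^2 \ln f(y_t \mid r_t) / \partial (r_t^{(i)})^2 < 0$ for every $i$, and concludes that the score is decreasing in $r_t^{(i)}$. You instead restrict $\ln f$ to the coordinate line $s \mapsto r_t + s e_i$. You then add the two first-order inequalities of strict concavity to show directly that $g'$ is strictly decreasing.

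Your route gains two things.
\begin{enumerate}
\item It avoids the invalid step ``strictly concave $\Rightarrow$ strictly negative second derivative'' (consider $-x^4$ at $0$). Strict concavity gives only $\le 0$ pointwise, yet it still forces a strictly decreasing derivative, which is exactly what you prove.
\item You correctly flag a real inconsistency. Read literally, joint strict log-concavity in $r_t$ cannot coexist with the difference-only dependence the paper assumes alongside it (Key Assumptions, Section~\ref{sec:dyn}). Moreover, for a non-participating player the diagonal second derivative is identically zero. So the paper's strict inequality ``for $i=1,\ldots,n$'' cannot hold as written.
\end{enumerate}
Your reinterpretation repairs this. You read the assumption as strict concavity in the rating differences of the participating players. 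Participants then get a strictly decreasing score and non-participants a constant zero score. The paper's examples (logistic, Skellam, ordered logit, Plackett--Luce) all satisfy this reading. The paper's argument is shorter, and it is fine if ``strictly log-concave'' is understood as ``the Hessian is negative definite on the subspace orthogonal to $\mathbf{1}$ for the participating players.''

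One small imprecision is in your parenthetical remark. For a strictly concave $C^2$ function, the zero set of $g''$ need only have empty interior; it need not consist of isolated points. Your main argument does not rely on that remark, so nothing is affected.
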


\begin{proof}
As $\ln f(y_t \mid r_t)$ is strictly concave, it holds that
\begin{equation}
\frac{\partial \nabla_n(r_t; y_t)}{\partial r_t^{(i)}} = \frac{\partial^2 \ln f(y_t \mid r_t)}{\partial \left( r_t^{(i)} \right)^2} < 0, \qquad i=1,\ldots,n.
\end{equation}
The score $\nabla_n(r_t; y_t)$ as a function of $r_t^{(i)}$ is therefore decreasing.
\end{proof}


\section{The Reversion Dynamics}
\label{sec:dyn}

Both the Elo and score-driven rating systems presented above were derived under the assumption that the specified probability model is the true underlying process generating the game outcomes, see, e.g., \eqref{eq:eloProb}. As noted in Section \ref{sec:elo}, it is more realistic to relax this assumption and suppose that players do not perform according to their assigned ratings, but rather according to their true skill levels, which may considerably differ from the perceived ratings. For example, at time $t=1$, all players are assigned the initial rating  $r_{\text{init}}$, which is not connected to their performance in any way. The main issue is that the true skills of the players are not directly observed, but can be inferred only through game outcomes.

In this section, we generalize our setting. We assume that each player $i$ has an unobserved, time-varying true skill $s_t^{(i)}$, $t = 1, 2, \ldots$. Furthermore, we assume that the structure of the underlying probability model of game outcomes is known, but it depends on the unknown skill levels $s_t = \left( s_t^{(1)}, \ldots, s_t^{(n)} \right)^\intercal$ instead of $r_t$. The corresponding probability mass function (or density function) is then $f(y_t \mid s_t)$. Following Properties \ref{prop:expected}--\ref{prop:decrease}, we assume that it is a twice differentiable, strictly log-concave function with support independent of $s_t$, and that it can be expressed as a function of differences of true skills. The ratings $r_t$ are, however, constructed in the same way as in \eqref{eq:sdUpdate}---that is, without knowledge of the true skills $s_t$.

Note that this assumption is quite general. For example, in the simple case of a win/loss game between two players, $A$ and $B$, at time $t$, the probabilities of the game outcomes are determined by $s_t^{(A)} - s_t^{(B)}$, which can take an arbitrary value, making the model fully general. The specific form of $f(y_t \mid s_t)$ is therefore not restrictive but allows for a comparison between $r_t$ and $s_t$.

We show that even though $r_{t+1}^{(i)}$ is constructed only from $r_t$ and $y_t$ (through $\nabla_i(r_t; y_t)$), it is related to $s_t$. Specifically, we demonstrate in the following proposition that when $r_t^{(i)} > s_t^{(i)}$ and $r_t^{(j)} = s_t^{(j)}$ for all $j \neq i$, the expected value $\mathrm{E} [ \nabla_i(r_t; y_t) \mid s_t ]$ is negative. In other words, when the rating of player $i$ is overestimated, the expected update of their rating is negative. Conversely, when $r_t^{(i)} < s_t^{(i)}$ and $r_t^{(j)} = s_t^{(j)}$ for all $j \neq i$, $\mathrm{E} [ \nabla_i(r_t; y_t) \mid s_t ]$ is positive. Finally, when $r_t^{(j)} = s_t^{(j)}$ for all $j$, $\mathrm{E} [ \nabla_i(r_t; y_t) \mid s_t ]$ equals zero, which is consistent with Proposition \ref{prop:expected}. We call this the \emph{reversion property}. The top plot in Figure \ref{fig:reverting} illustrates this behavior---the rating tends to oscillate around the true skill level, and when the skill level changes, the rating tends to follow. The score-driven rating system thus produces meaningful ratings even in the realistic case where the true skills of players are unobserved and differ from their ratings.

\begin{proposition}[Reversion Dynamics]
\label{prop:reversion}
Let $f(y_t \mid s_t)$ be a twice differentiable function with support independent of $s_t$. Furthermore, assume that $f(y_t \mid s_t)$ is strictly log-concave. Let $r_t^{(j)} = s_t^{(j)}$ for all $j \neq i$. Then, $\mathrm{E} [ \nabla_i(r_t; y_t) \mid s_t ]$ has the same sign as $s_t^{(i)} - r_t^{(i)}$.
\end{proposition}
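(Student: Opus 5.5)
The plan is to study the function $g(x) = \mathrm{E}[\nabla_i(r_t; y_t) \mid s_t]$, where we fix $s_t$ and the other ratings $r_t^{(j)} = s_t^{(j)}$ for $j \neq i$, and set $r_t^{(i)} = x$. Writing the expectation out, $g(x) = \int_{\mathcal{Y}} \frac{\partial \ln f(y_t \mid r_t)}{\partial r_t^{(i)}} \Big|_{r_t^{(i)} = x} f(y_t \mid s_t)\, d y_t$. In $g$ the weighting density $f(y_t \mid s_t)$ does not depend on $x$, and only the integrand varies with $x$. We will show two things: $g(s_t^{(i)}) = 0$, and $g$ is strictly decreasing in $x$. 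Together these give that $g(x)$ has the sign of $s_t^{(i)} - x$, which is the claim.

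\emph{Step 1 (the zero).} When $x = s_t^{(i)}$, we have $r_t = s_t$ exactly, so the expectation is the expected score evaluated at the true parameter. Proposition \ref{prop:expected} (applied with $s_t$ in place of $r_t$) then gives $g(s_t^{(i)}) = 0$. This uses the support independence assumed in the statement and the interchangeability of differentiation and integration. We will state the latter explicitly as part of the regularity conditions, in the same spirit as Proposition \ref{prop:expected}.

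\emph{Step 2 (monotonicity).} By strict log-concavity, for every fixed $y_t$ we have $\partial^2 \ln f(y_t \mid r_t)/\partial (r_t^{(i)})^2 < 0$, exactly as in Proposition \ref{prop:decrease}. Hence for $x_1 < x_2$ the integrand at $x_2$ is strictly smaller than at $x_1$ pointwise in $y_t$. Integrating against the positive weight $f(y_t \mid s_t)$ over its support gives $g(x_2) < g(x_1)$. This argument integrates a pointwise strict inequality, so it avoids differentiating under the integral a second time. We only need $g$ finite, i.e.\ the score integrable under $f(\cdot \mid s_t)$, which we assume as part of the regularity conditions. Strictness survives because the support has positive mass.

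\emph{Obstacle.} The main subtlety is which concavity is actually needed. Strict concavity of $\ln f$ jointly in all ratings would fail for functions of rating differences, since these are constant along $(1,\ldots,1)$. We therefore only use strict concavity in the single coordinate $r_t^{(i)}$ with the others fixed. This is what Proposition \ref{prop:decrease} really uses, and it holds in the examples. Strictly, a negative second derivative almost everywhere, or strict concavity of the univariate section, suffices for the strict pointwise inequality. With Steps 1 and 2 in hand, $x > s_t^{(i)}$ gives $g(x) < 0$, $x < s_t^{(i)}$ gives $g(x) > 0$, and equality gives zero.
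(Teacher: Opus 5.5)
Your proof is correct and follows the paper's argument: the paper also gets the zero case from Proposition \ref{prop:expected}, and the strict cases by integrating the pointwise inequality $\nabla_i(r_t;y_t) > \nabla_i(s_t;y_t)$ (or $<$) from Proposition \ref{prop:decrease} against $f(y_t \mid s_t)$, which is your monotonicity of $g$ evaluated at $r_t^{(i)}$ and $s_t^{(i)}$. Your extra care makes explicit hypotheses the paper leaves implicit: you state interchangeability and integrability, and you use only strict concavity in the single coordinate $r_t^{(i)}$, since joint strict concavity is incompatible with dependence on rating differences alone.
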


\begin{proof}
First, let us consider the case of $s_t^{(i)} - r_t^{(i)} = 0$. As $s_t^{(i)} = r_t^{(i)}$, it follows directly from Proposition \ref{prop:expected} that
$$
\begin{aligned}
\mathrm{E} \left[ \nabla_i(r_t; y_t) \mid s_t \right] &= \mathrm{E} \left[ \nabla_i(s_t; y_t) \mid s_t \right] \\
&= 0.
\end{aligned}
$$
Next, let us consider the case of $s_t^{(i)} - r_t^{(i)} > 0$. From Proposition \ref{prop:decrease} and $s_t^{(i)} > r_t^{(i)}$, together with Proposition \ref{prop:expected}, it follows that
$$
\begin{aligned}
\mathrm{E} \left[ \nabla_i(r_t; y_t) \mid s_t \right] &= \int_{\mathcal{Y}} \nabla_i(r_t; y_t) f(y_t \mid s_t) d y_t \\
&> \int_{\mathcal{Y}} \nabla_i(s_t; y_t) f(y_t \mid s_t) d y_t \\
&= \mathrm{E} \left[ \nabla_i(s_t; y_t) \mid s_t \right] \\
&= 0. \\
\end{aligned}
$$
Finally, the case of $s_t^{(i)} - r_t^{(i)} < 0$ is analogous, with the inequality reversed.
\end{proof}

\begin{remark}
From Proposition \ref{prop:sum}, it follows that $\frac{1}{n} \sum_{i=1}^n r_t^{(i)} = r_{\text{init}}$ for all $t$. However, $s_t^{(i)}$, $i=1,\ldots,n$, can have arbitrary values in general and $\frac{1}{n} \sum_{i=1}^n s_t^{(i)}$ can vary over time. This allows us to consider the case $\frac{1}{n} \sum_{i=1}^n r_t^{(i)} \neq \frac{1}{n} \sum_{i=1}^n s_t^{(i)}$ in Proposition \ref{prop:reversion}.
\end{remark}

\begin{remark}
It is important to acknowledge that our reversion property does not imply mean reversion. Consider the simple case of a constant true skill for player $i$ over time, i.e., $s_t^{(i)} = s$ for all $t$. It is not implied that the long-term average of $r_t^{(i)}$ necessarily equals $s$. The bottom plot in Figure \ref{fig:reverting} illustrates an example where $r_t^{(i)}$ appears to have a long-term average, but one that differs from $s$. The ``convergence'' behavior is influenced by various factors, including the mechanism used to pair players for games (or, more generally, to select the sets of players for games). We leave this issue for future research.
\end{remark}

\begin{figure}
\centering
\includegraphics[scale=0.55]{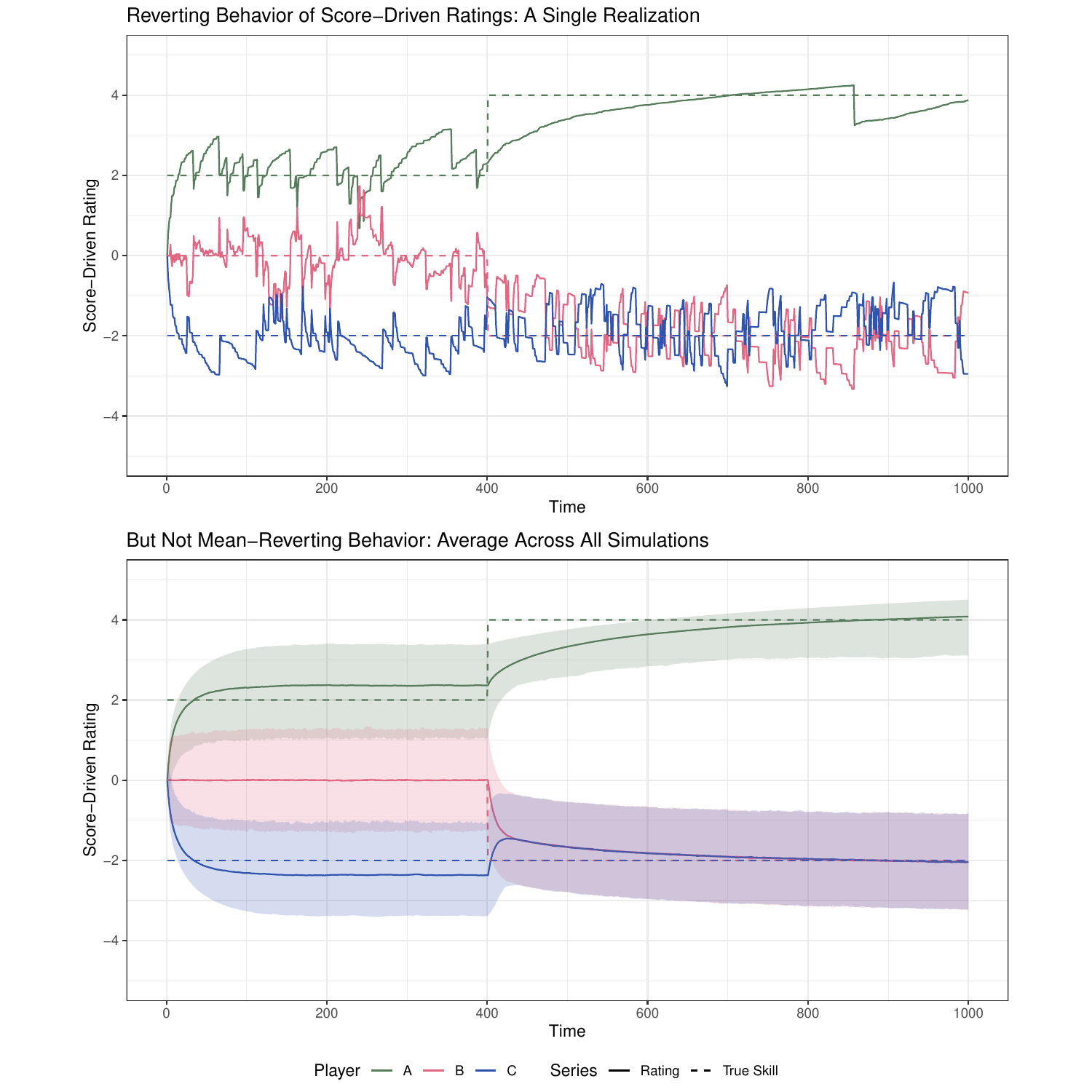}
\caption{Simulated paths of score-driven ratings for three players. Players are paired randomly for win/loss games. The top plot shows a single realization, while the bottom plot presents the average across all simulations with 95 percent confidence bands.}
\label{fig:reverting}
\end{figure}

\section{Conclusion}
\label{sec:con}

We deal with the rating of players and teams and present a generalization of the Elo rating system that utilizes the score of the probability distribution of game outcomes. We show that using the score leads to several desirable properties, making it suitable as an updating term for the rating after a game is played. These results are of interest to statisticians and sports analysts. The contribution of the paper is twofold:
\begin{enumerate}
\item Our results provide a theoretical rationale for many existing time-series models of sports performance, such as those employed in \cite{Gorgi2019}, \cite{Koopman2019}, \cite{Lasek2021}, \cite{Holy2022f}, and \cite{Holy2025a}.
\item Our approach can also serve as a guide for developing new models. First, a stochastic model for the game outcome needs to be specified. Its probability mass or density function must be a strictly log-concave function that can be expressed in terms of differences in ratings. Then, the score with respect to the ratings can be utilized as the updating term for the ratings.
\end{enumerate}

The properties presented here---namely, the equivalence to the Elo rating system for win/loss game outcomes under the logistic link, the zero expected value of the score, the summation of scores to zero over all players, and the decreasing score with increasing player rating---are straightforward to prove, highlighting the simplicity and elegance of the proposed approach. The main challenge lies in establishing the convergence behavior of the ratings toward the unobserved underlying true skills. We show that the score-driven rating system exhibits a reversion property, which validates the score-driven ratings in the general case of unknown true skills to a certain degree. Deriving stronger convergence results, however, remains a topic for future research. We believe that pursuing such results would require imposing restrictive assumptions and/or focusing on particular cases of the score-driven rating system.


%


\end{document}